\def\BibTeX{{\rm B\kern-.05em{\sc i\kern-.025em b}\kern-.08em
    T\kern-.1667em\lower.7ex\hbox{E}\kern-.125emX}}
\newtheorem{example}{Example}
\newtheorem{lemma}{Lemma}
\newtheorem{definition}{Definition}
\newtheorem{theorem}{Theorem}
\newtheorem{proposition}{Proposition}
\newcommand{\R}{{\mathbb{R}}}
\newcommand{\Q}{{\mathbb{Q}}}
\definecolor{lime}{HTML}{A6CE39}
\DeclareRobustCommand{\orcidicon}{
	\begin{tikzpicture}
	\draw[lime, fill=lime] (0,0) 
	circle [radius=0.16] 
	node[white] {{\fontfamily{qag}\selectfont \tiny ID}};
	\draw[white, fill=white] (-0.0625,0.095) 
	circle [radius=0.007];
	\end{tikzpicture}
	\hspace{-2mm}
}
\xdef\csname orcid\x\endcsname{\noexpand\href{https://orcid.org/\csname orcidauthor\x\endcsname}{\noexpand\orcidicon}}
\begin{document}

\title{Checking extracted rules in Neural Networks}


\author{\IEEEauthorblockN{Adrian Wurm\orcidA{}}
\IEEEauthorblockA{\textit{Lehrstuhl Theoretische Informatik}} \\
\textit{BTU Cottbus-Senftenberg}\\
Cottbus, Germany \\
wurm@b-tu.de}

%
\maketitle              
\begin{abstract} In this paper we investigate formal verification of extracted rules for Neural Networks under a complexity theoretic point of view. A rule is a global property or a pattern concerning a large portion of the input space of a network. These rules are algorithmically  extracted from networks in an effort to better understand their inner way of working. Here, three problems will be in the focus: Does a given set of rules apply to a given network? Is a given set of rules consistent or do the rules contradict themselves? Is a given set of rules exhaustive in the sense that for every input the output is determined?
Finding algorithms that extract such rules out of networks has been investigated over the last 30 years, however, to the author's current knowledge, no attempt in verification was made until now. A lot of attempts of extracting rules use heuristics involving randomness and over-approximation, so it might be beneficial to know whether knowledge obtained in that way can actually be trusted. 

We investigate the above questions for neural networks with ReLU-activation as well as for Boolean networks, each for several types of rules. We demonstrate how these problems can be reduced to each other and show that most of them are co-NP-complete.
\end{abstract} 
\section{Introduction}

Given the huge success of utilizing Neural Networks, NN for short, in the last decades, such nets are nowadays widely used in all kind of data processing, addressing tasks of varying difficulty.
There is a wide range of applications, the following exemplary references just 
collect non-exclusively some areas for further reading: Image recognition \cite{Krizhevsky}, natural language processing \cite{Hinton}, 
autonomous driving \cite{Grigorescu},
applications in medicine \cite{Litjens}, and prediction of stock markets \cite{Dixon}, just to mention a few. 
Khan et al. \cite{Khan} provide a survey of such applications, a mathematically oriented textbook
concerning structural issues related to Deep Neural Networks is \cite{Calin}.

The huge expressiveness and universality of neural networks comes with the drawback that their inner way of working is barely comprehended at all. Even when given a rather small network, the reasons why it performs its task so well are hardly understandable and the input-output correspondence often can only be evaluated pointwise, and not fathomed in a broader sense. One approach to remedy this problem is rule extraction, that is, reading out macroscopic information from a neural network to obtain knowledge on larger amounts of inputs, which also links this topic to Explainable Artificial Intelligence (EAI), see \cite{eai}. A great number of algorithms performing this task has been introduced over the last three decades, all of them with different advantages and disadvantages, see for example \cite{mekkaoui}, \cite{eai2} or \cite{eai3}. 

A rule is a statement of the form $\varphi(x)\Rightarrow \psi(N(x))$ about a neural network that either holds globally or fails for some input $x$. Here, $\varphi$ refers exclusively to the input and $\psi$ to the output. The optimal set of rules that one could obtain from a net by a rule-extracting algorithm would have the following properties: It is always correct in the sense that whatever it predicts is actually the output of the net, it covers the nets behavior in the sense that for every input at least one rule determines where to it is mapped, it is small meaning it contains only a few rules each of which can be represented in a low amount of space, and the algorithm by which it is obtained is efficient. We will concentrate on the first two properties, that are partly motivated by the fact that among the many different flavors of areas where the use of Neural Networks seems beneficial, some also involve safety-critical systems like autonomous driving 
or power grid management. In such a setting, when security issues become important, aspects of certification come into play \cite{Huang}.

Our first group of main results are as follows: Checking whether such a property holds is a co-NP-complete problem when we restrict us to ReLU or Heaviside as activation. This holds for both real-valued and for Boolean networks. We also show several reductions between the types of rules that we introduce and conclude that the complexities of deciding whether the such rules hold will probably be close together for more complicated activation functions. The second major area of investigation when dealing with extracted rules is deciding whether they are consistent, meaning that they do not contradict each other, and whether for each input they enforce an output. We show that answering these questions is at least as hard as checking whether a rule holds already when using ReLU-activation; as soon as we allow rules with semi-linear descriptions these problems are co-NP-hard. When talking about complexities, we work in the standard Turing model. The paper is organized as follows: In Section \ref{Section:preliminaries} we collect basic notions, recall the definition of feedforward neural nets as well as the precise decision problems that we will discuss.
Section \ref{Section:verex} is about comparing the different types of extracted rules and classifying them by the computational complexity of verifying them. In Section \ref{Section:consistency} we then determine the computational complexity of verifying whether a set of rules is consistent and/or exhaustive. We finish by outlining some future research.

\section{Preliminaries and Definitions}\label{Section:preliminaries}

We start by defining the problems we are interested in. The networks considered are exclusively feedforward. In their most general form, they can process real numbers and contain rational weights. 

\begin{definition} A (feedforward) neural network N is a layered graph that represents a
function of format $\mathbb R^n \rightarrow \mathbb R^m$, for some $n,m \in \mathbb N$.
The first layer with label $\ell = 0$ is called the \emph{input layer} and consists of $n$ nodes called \emph{input nodes}. 
The input value $x_i$ of the $i$-th node is also taken as its output $y_{0i}  := x_i$.
A layer $1 \leq \ell \leq L - 2$ is called \emph{hidden} and consists of $k(\ell)$ nodes called \emph{computation nodes}. The $i$-th node of layer
$\ell$ computes the output $y_{{\ell}i} = \sigma_{{\ell}i} (\sum\limits_j c_{ji}^{({\ell}-1)}y_{({\ell}-1)j} + b_{{\ell}i})$. 
Here, the $\sigma_{{\ell}i}$ are (typically nonlinear)  \emph{activation} functions (to be specified later on) and the
sum runs over all output neurons of the previous layer. The $c^{({\ell}-1)}_{ji}$ are rational constants which are called \emph{weights}, and $b_{{\ell}i}$ is a rational constant  called \emph{bias}.
The outputs of all nodes of layer $\ell$ combined
gives the output $(y_{\ell 0}, . . . , y_{\ell(k-1)})$ of the hidden layer.
The final layer $L - 1$ is called \emph{output layer} and consists of $m$ nodes called \emph{output nodes}.
The $i$-th node computes an output $y_{(L-1)i}$ in the same way as a node in a hidden
layer. The output $(y_{(L-1)0}, . . . , y_{(L-1)(m-1)})$ of the output layer is considered
the output $N(x)$ of the network $N$.

We say that a network is classifying, if the output $N(x)$ is the argmax of the output layer in $\{0,1\}^m$ instead of the vector itself. This means, that an output component is 1 if the corresponding output node of the net returns a value that is at least as big as all the other entries, and 0 otherwise 
\end{definition}

Note that it might happen that a classifying network returns an output where more than one entry is 1, this is, however, very unlikely in practice.

We will deal a lot with logical statements and SAT-reductions, so in the following, when referring to the correctness of a 0-1-output or evaluating a Boolean statement on a numerical input, we interpret 0 as "false" and 1 as "true".

\begin{definition}\label{rules}
Let $N:\R^n\rightarrow\R^m$ be a neural network, $S_\varphi\subseteq\mathbb R^n$ and $S_\psi\subseteq\mathbb R^m$ sets with indicator functions $\varphi$ and $\psi$, and $n,m\in\mathbb N$. A rule is a statement of the form $\varphi\Rightarrow \psi$, that applies to $N$ if for all $x\in\R^n$ we have that $\varphi(x)\Rightarrow \psi(N(x))$. We also denote it by $(\varphi,\psi)$ and call $\varphi$ the conditional part and $\psi$ the conclusion part. A rule is called 

i) propositional, if $\varphi$ and $\psi$ are given as Boolean combinations of paraxial inequalities, meaning intervals of the form $x_i\in[a,b]$, where open intervals and $a,b\in\{\infty,-\infty\}$ are allowed,

ii) oblique, if $\varphi$ and $\psi$ are given as Boolean combinations of half-spaces. Here, the hyperplanes defining the half-spaces do not need to be paraxial.

iii) MofN, if there exist $\square\in\{\leq,=,\geq\}$, $t,r\in\mathbb N$ and $\varphi_i , i\in\{1,...,t\}$ indicator functions on Boolean combinations of half-spaces, so that the conditional part is of the form $\varphi(x)\Leftrightarrow \vert\{ i\in\{1,...,t\} \mid \varphi_i(x)\}\vert \square r$. Intuitively such a rule states that for all $x\in\R^n$, $\psi(x)$ is necessarily the case whenever $x$ evaluates at least $(\square$ is $\geq)$, exactly $(\square$ is $=)$ or at most $(\square$ is $\leq)$  $r$ of the $\varphi_i$ to 1. \footnote{In the literature, $r$ is usually denoted by $m$ and $t$ by $n$, consistent with the term MofN. In order to not get confused with the input dimension $n$ and the output dimension $m$ of the network, our notation differs.}

For all of the above rule types, we consider the corresponding decision problem to find out whether such a rule holds or not.
\end{definition}

\begin{example}
Consider the following network, with all activations being ReLU
\begin{center}
\includegraphics[width=0.45\textwidth]{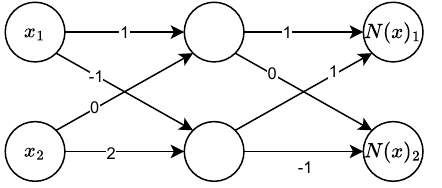}
\end{center}

Possible rules are:

$\varphi_1\Rightarrow\psi_1$, where $\varphi_1(x)=x_1\in(1,2)\lor (x_2\in[2,5]\land x_1\geq3)$ and $\psi_1(N(x))=N(x)_1\geq 1$. this rule is propositional, because $x_1\in(1,2), x_2\in[2,5], x_1\geq3$ and $N(x)_1\geq 1$ are all axial-parallel, it does apply to $N$.

$\varphi_2\Rightarrow\psi_2$, where $\varphi_2(x)=x_1+2x_2=3\land x_1\leq x_2$ and $\psi_2(N(x))=N(x)_1\leq N(x)_2$. It is oblique, because $x_1+2x_2=3, x_1\leq x_2$ and $N(x)_1\leq N(x)_2$ are all linear conditions, but not propositional, because $x_1+2x_2=3$ for example is not paraxial. It does not apply $N$.

$\varphi \Rightarrow\psi_1$, where $\varphi(x)\Leftrightarrow \vert\{ i\in\{1,2\} \mid \varphi_i(x)\}\vert = 1$ and $\varphi_1,\varphi_2,\psi_1$ are as above.
It is MofN and it also applies to $N$. 
\end{example}

\begin{definition}
A neural network $N:\{0,1\}^n\rightarrow\{0,1\}^m$ is called Boolean, if it only uses the Heaviside activation and all weights and biases are in $\{-1,0,1\}$; rules are defined analogously as above.


\end{definition}

Note that a Boolean network is by definition classifying. It is folklore that Boolean circuits can efficiently be translated to Boolean networks computing the same function and vice versa. Also we have that the framework remains as powerful if we demand that $\varphi,\varphi_i,\psi$ to be SAT-formulas. As consequence, it makes no sense to differ between oblique and propositional rules for Boolean networks any more, we therefore restrict ourselves to oblique rules.

In the literature, see for example \cite{fuzzy}, one further type of frequently examined rules are so called fuzzy rules expressed in linguistic concepts such as: "If $x_1$ and $x_3$ are big and $x_2$ is small, then $N(x)_2$ is big". In order obtain a proper decision problem, we interpret them as monotonicity statements, such as "If $x_1\geq y_1$, $x_3\geq y_3$, $x_2\leq y_2$, then $N(y)_2\leq N(x)_2$"
\begin{definition}
Let $N$ be a NN, $A\in\mathbb Q^{\ell\times n}$ and $ i\in\{1,...,k\}$. Then we call $(A,i)$ a monotonicity rule, it applies to $N$ if for all $x,y\in \mathbb R^n$
\[\underset{j}{\operatorname{argmax}}\ N(x)_j=i \land Ax\leq Ay \Rightarrow\underset{j}{\operatorname{argmax}}\ N(y)_j=i.\]
$(A,i)$ is a total monotonicity rule, if
\[\forall x,y\in \mathbb R^n : Ax\leq Ay \Rightarrow N(x)_i\leq  N(y)_i,\]
where $Ax\leq Ay$ is understood componentwise, Boolean total monotonicity rules are defined analogously.
\end{definition}

In contrast to real-valued networks, it is highly likely that a high-dimensional output of a computation performed by a Boolean network does not have a unique argmax. It is therefore not sensible to define Boolean monotonicity rules.

Note that a monotonicity rule $(A,i)$ applies to $N$ if and only if it applies as a total monotonicity rule to the corresponding classifying network.

One general comment concerning the used computational model is in place here: The networks in principle are allowed to compute with real numbers, so the valid inputs belong to some space $\mathbb R^n$. However, the network itself is specified by its discrete structure and rational weights defining the linear combinations computed by its single neurons. Should one move on to more general activations, an issue to be discussed is the computational model in
which one performs the network calculations. If, for example, the sigmoid activation $f(x) = 1/(1 + e^{-x})$ is used, it has to be specified in which sense it is computed by the net, exactly or approximately, and at which costs these operations are being performed. In our case, the networks are assumed to compute exactly, and the decision problems are examined in the Turing model. We presuppose the basic concepts of complexity theory.

\section{Verification of extracted rules}\label{Section:verex}

We now intend to find out how difficult it is to answer the questions defined in Section \ref{Section:preliminaries} and whether their complexities can be related to each other.
\begin{theorem}\label{mainall}
Let $N$ be a network. The following hold for standard as well as for classifying networks: 

a) Propositional rules are special cases of oblique rules, which in turn are special cases of MofN rules.

b) Checking monotonicity rules and total monotonicity rules is linear-time reducible to checking oblique rules. 

c) Checking total monotonicity rules is linear-time reducible to checking monotonicity rules.
 
\end{theorem}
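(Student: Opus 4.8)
The plan is to treat the three parts independently: (a) by syntactically embedding each rule format into the next, and (b), (c) by building an auxiliary network together with rules of the target type. The constructions never inspect the semantics of $N$ beyond its input/output behaviour, so they apply uniformly to standard and classifying networks and are plainly computable in linear time. For \emph{part (a)}, the inclusion of propositional into oblique rules follows because a paraxial inequality $x_i\in[a,b]$ (including its open and unbounded variants) is the conjunction of the two half-spaces $e_i^\top x\ge a$ and $e_i^\top x\le b$, whose bounding hyperplanes merely happen to be axis-parallel; hence a propositional $\varphi$ (or $\psi$) is already a Boolean combination of half-spaces. For oblique into MofN I would use the one-term counting encoding: given an oblique conditional part $\varphi$, put $t=1$, let $\square$ be $\ge$, set $r=1$ and $\varphi_1:=\varphi$; then $|\{i\in\{1\}\mid\varphi_i(x)\}|\ge 1$ holds iff $\varphi(x)$, so the MofN condition defines the same input set while the conclusion is left untouched. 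Both steps only rewrite the description of $\varphi$ and $\psi$, so they hold for standard and classifying $N$ alike.

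For \emph{part (b)} the obstacle is that a (total) monotonicity rule quantifies over two inputs $x,y$, whereas a rule constrains a single input. I would eliminate this by forming the product network $N'\colon\R^{2n}\to\R^{2m}$, $N'(x,y):=(N(x),N(y))$, two disjoint copies of $N$ reading disjoint input coordinates (linear size). The constraint $Ax\le Ay$ then becomes the conjunction of the half-spaces $a_k^\top x-a_k^\top y\le 0$ over the rows $a_k$ of $A$, an oblique $\varphi$ on $\R^{2n}$. For total monotonicity the conclusion $N(x)_i\le N(y)_i$ is the single half-space $w_i\le w_{m+i}$ in the output $w=(N(x),N(y))$, so the total monotonicity rule holds for $N$ iff $(\varphi,\psi)$ holds for $N'$. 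For ordinary monotonicity I would rewrite the implication $P\wedge Q\Rightarrow R$ as $Q\Rightarrow(\neg P\vee R)$, keeping $Q=(Ax\le Ay)$ as $\varphi$ and taking $\psi=\neg(\operatorname{argmax}N(x)=i)\vee(\operatorname{argmax}N(y)=i)$; since $\operatorname{argmax}N(x)=i$ expands to $\bigwedge_j(w^{(1)}_i\ge w^{(1)}_j)$, the whole $\psi$ is a Boolean combination of (possibly open) half-spaces, i.e.\ oblique. Alternatively, one may first use the remark preceding the theorem that a monotonicity rule on $N$ equals a total monotonicity rule on the classifying version of $N$, and then apply the total-monotonicity reduction just described.

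\emph{Part (c)} is the delicate direction, since we must express the comparison of two real outputs $N(x)_i,N(y)_i$ through an $\operatorname{argmax}$ condition, which is all that monotonicity rules can test. I would exploit the superlevel-set characterisation: $f:=N(\cdot)_i$ is monotone w.r.t.\ $\le_A$ iff every set $\{x\mid f(x)\ge c\}$ is upward closed under $\le_A$. To pack all thresholds into a single instance I augment the input by a coordinate $c$ and build $N'\colon\R^{n+1}\to\R^2$ realising the test $f(x)\ge c$, e.g.\ $N'(x,c)=(\operatorname{ReLU}(c-N(x)_i),\operatorname{ReLU}(N(x)_i-c))$, so that component $1$ is maximal exactly when $N(x)_i\ge c$. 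Choosing $i'=1$ and $A'(x,c)=(Ax,-c)$, so that $A'(x,c)\le A'(x',c')$ means $Ax\le Ax'$ and $c\ge c'$, the set $S=\{(x,c)\mid\operatorname{argmax}N'(x,c)=1\}=\{(x,c)\mid f(x)\ge c\}$ is upward closed under $\le_{A'}$ iff $f(x)\ge c\wedge Ax\le Ax'\wedge c\ge c'\Rightarrow f(x')\ge c'$; instantiating $c=c'=f(x)$ recovers total monotonicity, and total monotonicity trivially implies the displayed statement, so the two are equivalent. The one genuinely technical point is passing $c$ and the scalar $N(x)_i$ into the final comparison layer under a fixed activation; for ReLU I would carry $\operatorname{ReLU}(c)$ and $\operatorname{ReLU}(-c)$ (both nonnegative, hence unchanged by ReLU) through the copied layers and reconstruct $c$ in the weights feeding the two output units, with an analogous gadget for other activations. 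All gadgets have constant size, so $N'$, $A'$ and $i'$ are produced in linear time, settling (c).
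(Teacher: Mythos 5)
Your proposal follows essentially the same route as the paper on all three parts: the same syntactic inclusions in (a), the same two-copy product network with a semi-linear rewriting of the argmax implication in (b), and the same threshold-input trick in (c) (the paper propagates the threshold $a$ to a second output node and forces $a=a'$ via two extra rows of $A'$, whereas you compute the comparison inside the network and allow $c\geq c'$ — an immaterial variation). The only slip is that in (c) your two output components are swapped as written (component $1=\operatorname{ReLU}(c-N(x)_i)$ is maximal iff $N(x)_i\leq c$, not $N(x)_i\geq c$), but since your subsequent equivalence argument uses the intended semantics, the fix is a trivial relabeling.
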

\begin{proof}
a) A propositional rule is a paraxial oblique rule, so obviously a special case. 
An oblique rule in turn is an MofN rule where $\square$ is $=$ and $t=r=1$.

b) Let $(A,i)$ be a monotonicity rule with $A\in\Q^{\ell\times n}$ fitting the dimensions of $N$. We construct a network $N'$ and a propositional rule $(\varphi\Rightarrow\psi)$ so that $(A,i)$ applies to $N$ if and only if $(\varphi\Rightarrow\psi)$ applies to $N'$. The idea is that $N'$ receives both $x$ and $y$ as inputs, computes both $N(x)$ and $N(y)$ in parallel, $\varphi$ demands that $Ax\leq Ay$ and $\psi$ demands that $\underset{j}{\operatorname{argmax}}\ N(x)_j=i \Rightarrow \underset{j}{\operatorname{argmax}}\ N(y)_j=i$. In the case of total monotonicity we demand that $N(x)_i\leq N(y)_i$ instead.

It is easy to see that $Ax\leq Ay$ is semi-linear, so $\varphi$ is given trivially, the same holds for $\psi$ in the case of total monotonicity. For $\psi$, observe that $\underset{j}{\operatorname{argmax}}\ N(x)_j=i \Rightarrow \underset{j}{\operatorname{argmax}}\ N(y)_j=i$ is equivalent to 
\[\bigwedge\limits_{j=1}^m N(y)_j \leq N(y)_i \lor\bigvee\limits_{j=1}^m N(x)_j > N(y)_i,\] which is again semi-linear. Finally, observe that the parallel computation of $N(x)$ and $N(y)$ in $N'$ is just drawing together two copies of $N$, which is again a network in the same model with the same activation functions. 

In the case of classifying networks, replace $\underset{j}{\operatorname{argmax}}\ N(x)_j=i$ with $N(x)_i=1$.

c) We can without loss of generality assume that $N$ has only one output node since total monotonicity only depends on one output dimension, so let $(A,1)$ be a total monotonicity rule for $N$. We want to construct a network $N'$ and an equivalent monotonicity rule $(A',1)$. Total monotonicity is equivalent to $\forall x,y,a: N(x)\geq a\land Ax\leq Ay\Rightarrow N(y)\geq a$. We add another input node and propagate its value interpreted as $a$ to the output layer without interfering with the previous computation, so the new network computes $N':\R^{n+1}\rightarrow \R^2 ,  (x,a)\mapsto (N(x)_1,a)$. To the matrix $A$, we add one zero column to avoid an interference between $x$ and $a$, to the matrix obtained this way we add two rows $(0,0,...,0,1)$ and $(0,0,...,0,-1)$ to enforce that $a$ is constant, so we have 
\[A'=\begin{pmatrix}
A_{11} & \dots & A_{1n}&0\\
\vdots & \ddots & \vdots&\vdots\\
A_{\ell1} & \dots & A_{\ell n}&0\\
0 & 0 & 0 &1\\
0 & 0 & 0 &-1
\end{pmatrix}\in\R^{(\ell+2)\times(n+1)}.\]
Now it holds that $A'(x,a)\leq A'(y,a')$ if and only if $Ax\leq Ay \land a=a'$. This implies that the total monotonicity of $(A,i)$ for $N$ is equivalent to 
\[\forall x,y,a: N(x)>a\land Ax\leq Ay\Rightarrow N(y)\geq a,\] which again is equivalent to $max(N(x),a)=N(x))\land A'(x,a)\leq  A'(y,a')\Rightarrow max(N(y),a)=N(y))$
which in turn is equivalent to $\underset{j}{\operatorname{argmax}}\ N'(x,a)_j=1\land A'(x,a)\leq  A'(y,a')\Rightarrow \underset{j}{\operatorname{argmax}}\ N'(y,a)_j=1$
which finally is equivalent to monotonicity of $(A',1)$ for $N'$.

\end{proof}

\begin{proposition}\label{mainReLU}
Let $N$ be a ReLU-net. 
Deciding whether a set of propositional rules holds for a network is co-NP complete for standard as well as for classifying networks. The same holds for monotonicity rules, oblique rules and MofN rules.

\end{proposition}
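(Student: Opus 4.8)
The plan is to prove both co-NP membership and co-NP-hardness. I would treat propositional, oblique and MofN rules by a single uniform argument and then obtain the monotonicity case from the reductions of Theorem \ref{mainall}. Throughout, a \emph{set} of rules holds iff every rule holds, so the complementary question ``does the set fail?'' asks for one rule together with a single counterexample input $x$; the extra guess of which rule fails costs nothing.

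For membership I would show that this complementary problem lies in NP. The certificate for a failure consists of (i) an activation pattern of the ReLU net, i.e.\ a choice active/inactive for every hidden neuron, and (ii) a truth assignment to the finitely many atomic linear (in)equalities occurring in $\varphi$ and $\psi$ (for an MofN rule, instead a guess of which of the $\varphi_i$ hold, from which the count, and hence the truth value of $\varphi$, is read off in polynomial time). Fixing the pattern turns $N$ into an affine map on the corresponding polyhedral region, so every atom of $\psi$ becomes linear in $x$. One then verifies in polynomial time that the guessed truth values make $\varphi$ true and $\psi$ false, and that the conjunction of (a) the half-space constraints defining the activation region and (b) the atoms forced true/false by the guess is feasible over $\Q$. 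Any feasible point is a genuine rational counterexample of polynomial bit-size, and conversely every counterexample realizes some pattern, so the complement is in NP and the problem in co-NP. Oblique and MofN rules are covered verbatim, and monotonicity membership then follows from Theorem \ref{mainall}(b), which reduces it to the oblique case.

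For hardness I would reduce from UNSAT. Given a $3$-CNF $F$ over $z_1,\dots,z_n$, I build a ReLU network computing $F$ exactly on $\{0,1\}^n$ from the gadgets $\mathrm{NOT}(x)=1-x$, $\mathrm{AND}(x,y)=\mathrm{ReLU}(x+y-1)$ and $\mathrm{OR}(x,y)=(x+y)-\mathrm{ReLU}(x+y-1)$ (the ReLU analogue of the circuit-to-net folklore). The propositional rule $\varphi\Rightarrow\psi$ with $\varphi(x)=\bigwedge_i\bigl(x_i=0\lor x_i=1\bigr)$ and $\psi(N(x))=(N(x)\le 0)$ then holds iff no Boolean input makes $N$ positive, i.e.\ iff $F$ is unsatisfiable; this gives co-NP-hardness for propositional rules. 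For classifying nets one adds a constant second output and replaces $\psi$ by the argmax condition selecting the ``reject'' coordinate. Since propositional rules are oblique and oblique rules are MofN by Theorem \ref{mainall}(a), the very same instances witness hardness for those two problems.

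The remaining and, I expect, most delicate case is monotonicity. Here I would prove that \emph{total} monotonicity is already co-NP-hard and then invoke Theorem \ref{mainall}(c), which reduces total monotonicity to monotonicity. The reduction is again from $F$: I construct a network and a matrix $A$ so that an $A$-violation, a pair $x,y$ with $Ax\le Ay$ but $N(x)_1>N(y)_1$, exists precisely when $F$ is satisfiable. The obstacle is that the formula-simulating subnetwork is faithful only on $\{0,1\}^n$, so I must rule out spurious violations arising from non-Boolean inputs and from the non-monotone negation gadgets. My plan is to add to the output a monotone backbone together with a piecewise-linear non-Booleanity penalty (e.g.\ $\sum_i \min(z_i,1-z_i)$, which vanishes exactly on the cube after clamping the inputs to $[0,1]$), calibrated so that the only surviving decreases of $N$ along the $A$-order sit at Boolean satisfying points. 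Getting this calibration right is the crux of the whole proposition.
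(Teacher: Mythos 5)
Your membership argument (guess an activation pattern of the ReLU nodes plus a truth assignment to the atomic half-space constraints, check consistency, then decide feasibility of the resulting linear system in polynomial time) is essentially identical to the paper's certificate construction, and your UNSAT reduction for propositional rules is correct; indeed, by taking $\varphi(x)=\bigwedge_i(x_i\in[0,0]\lor x_i\in[1,1])$ you confine the rule to Boolean inputs, which is a legitimate and even slightly simpler route than the paper's, and the inheritance to oblique and MofN rules via Theorem \ref{mainall}(a) is right.

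The genuine gap is the monotonicity case, and you name it yourself: your formula-simulating net is faithful only on $\{0,1\}^n$, monotonicity rules quantify over \emph{all} real inputs with no conditional part to hide behind, and your proposed ``non-Booleanity penalty'' is explicitly left uncalibrated (``the crux''). So the proposition remains unproved for monotonicity rules. The paper closes exactly this hole not with a penalty but by building clamping into the network itself: each literal is computed as $f(2x_i-1)$ resp.\ $f(1-2x_i)$, where $f(x)=\mathrm{ReLU}(\mathrm{ReLU}(x)-\mathrm{ReLU}(x-1))$ saturates to $[0,1]$, each clause as $f(x_a+x_b+x_c)$, and the output as $\mathrm{ReLU}(\sum_i c_i-(q-1))$. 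With this choice every real input behaves like its rounding to $\{0,1\}^n$: a literal node can be positive only if the corresponding rounded Boolean literal is true, so if $S$ is unsatisfiable some clause node is $0$ at every $x\in\R^n$ and hence $N\equiv 0$ globally --- a constant function, which obeys every monotonicity and total monotonicity rule --- while a satisfying assignment produces a point of value $1$ and hence a violation. Your penalty idea, by contrast, deliberately adds non-monotone structure at non-Boolean points, which is precisely what you would then have to prove does not itself create or mask violations; the clamping construction makes that issue vanish, and it simultaneously serves as the hardness instance for the propositional case, which is why the paper proves both with a single construction.
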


\begin{proof} 
By Theorem \ref{mainall} it suffices to show that the verification of propositional rules and the verification of monotonicity rules are co-NP-hard and that the verification of MofN rules is in co-NP.

To show co-NP-hardness, let $S(x_1,...,x_n)$ be an instance of 3SAT. To reduce it to an instance of the verification of propositional rules, we construct a network $N$ with $n$ input nodes by gadgets that evaluates $N(x_1,...,x_n)=1$ whenever $S(x_1,...,x_n)$ holds, and $N(x_1,...,x_n)<1$ otherwise. The first gadget we need is the function $f(x)=\begin{cases}0, \ x\leq0\\
x , \ x\in[0,1]\\
1 , \ x\geq 1
\end{cases}
$, it can be computed by a ReLU-net via $f(x)=ReLU(ReLU(x)-ReLU(x-1))$. For the construction of our problem instance, we set the conditional part $\varphi(x_1,...,x_n)=\chi_{[0,1]^n}(x_1,...,x_n)$ the indicator function on the unit cube. Furthermore, we introduce for each variable $x_i$ hidden nodes $v_i=f(2x_i-1)$ and $\bar v_i=f(1-2x_i)$ corresponding to the literals, so that both are in $[0,1]$, this can be done in two hidden layers. Note that at least one of them is zero, we interpret that as false, and at most one of them is 1, we interpret that as true. In the next two hidden layers, we introduce for every clause $C_i=(x_a\lor x_b\lor x_c)$ in $S$, $1\leq i\leq q$, a node $c_i$ computing $c_i=f(x_a+x_b+x_c)$, where $x_a,x_b,x_c\in\{v_i,\bar v_i \mid 1\leq i\leq n\}$ are the nodes corresponding to the literals. Note that since we are operating in $[0,1]^n$, these nodes can have at most value 1, and this only happens if at least one of them is positive. In the output layer, compute $v=ReLU(\sum\limits_{i=1}^qc_i-(q-1))$. Note that in $[0,1]^n$, this node can have at most value 1, and this only happens if $c_i=1$ are all true. We set $\psi(N(x))\Leftrightarrow N(x)\in[0,1)$, the only way that this will always be the case is when $S$ was not satisfiable. 

To see that this also works as a reduction to monotonicity rules, observe that if $S$ is not feasible, then $N$ is constantly zero. To see this, let $x=(x_1,...,x_n)\in[0,1]^n$ be any input and $y=(y_1,...,y_n)\in\{0,1\}^n$ the componentwise bankers rounding of $x$. Now, due to unsatisfiability, there is a clause $C_i$ in $S$ so that all literals in $C_i$ evaluated in $y$ are zero, and by construction of $v_i$ and $\bar v_i$ the same holds for $x$.

It remains to show that verification of MofN rules is in co-NP. We do this for the case $\square="\geq"$, the cases $\square\in\{=,\leq\}$ are analogous. Let $(\varphi\Rightarrow \psi)$ be an MofN rule, demanding that at least $m$ of $\varphi_1,...,\varphi_n$  are satisfied. Assume that this rule is not satisfied by $x\in\R^n$.

We show that the existence of such an $x$ can be verified non-deterministically. The certificate is not precisely $x$, but consists of the following four information parts: $a\in\{0,1\}^n$ indicating which of the $\varphi_i$ are met in $x$ and which ones are not, the information for every ReLU-node whether it is active (input positive) or inactive (input negative) when evaluated in $x$, and for the input $x$ the information in which of the half-spaces $Ax\leq b$ that are used in the presentation of the $\varphi_i$ it is, as well as the same information for the output $N(x)$ regarding $\psi$.

First, we can check whether the half-space-information actually evaluates the $\varphi_i=1$ and $\psi=0$ in polynomial time as well as whether the postulated amount $m\leq\sum\limits_{i=1}^na_i$ of satisfied formulas is sufficient. Now we translate the remaining information from the certificate into an equation system:

There is an input $x$ that is in all the postulated half spaces $Ax\leq b$ and in none of the forbidden ones $Ax>b$ and for each active node $v=\sum\limits_{i=1}^kv_i \land \sum\limits_{i=1}^kv_i\geq0$ and each inactive node $v=0\land \sum\limits_{i=1}^kv_i\leq0$ we demand that its output is correctly computed as well as that the output is in all the postulated half spaces $AN(x)\leq b$ and in none of the forbidden ones $AN(x)>b$. Observe that all of the conditions are linear. This leaves us with solving a system of linear equations, which is well known to be possible in polynomial time, see \cite{JonssonB}.
\end{proof}

We now want to obtain the corresponding classification results for the discrete case:

\begin{proposition}
Let $N$ be a Boolean network. Then the following problems are co-NP complete:

1.) The verification of Boolean propositional rules.

2.) The verification of Boolean monotonicity rules.

3.) The verification of Boolean MofN rules.

\end{proposition}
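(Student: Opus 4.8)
The plan is to prove the Boolean analogue by mirroring the structure of the ReLU case in Proposition~\ref{mainReLU}, establishing co-NP-hardness by a reduction from 3SAT and membership in co-NP by a nondeterministic certificate for a violating input. Since all three rule types reduce to one another by the Boolean versions of Theorem~\ref{mainall} (recall that for Boolean networks oblique and propositional rules coincide, and monotonicity rules reduce to oblique rules), it suffices to show that checking Boolean propositional (equivalently, oblique) rules is co-NP-hard, that checking Boolean monotonicity rules is co-NP-hard, and that checking Boolean MofN rules is in co-NP. Because the rule types collapse so tightly here, I would first pin down hardness for one convenient type and then propagate.

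For co-NP-hardness, the key step is to realize a 3SAT instance $S(x_1,\dots,x_n)$ as a Boolean network. This is where I would invoke the stated folklore fact that Boolean circuits translate efficiently into Boolean networks computing the same function. Given a 3SAT formula, I would build the standard monotone-style circuit evaluating $S$: negation of a variable is computed with a Heaviside gate implementing $1-x_i$ (achievable with weights in $\{-1,0,1\}$ and an appropriate bias), each clause $x_a\lor x_b\lor x_c$ is an OR-gate (a Heaviside node summing its three literal inputs with a suitable bias so that it fires iff at least one literal is $1$), and the overall conjunction of the $q$ clauses is an AND-gate (a Heaviside node whose bias forces output $1$ only when all $q$ clause-nodes fire). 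The resulting Boolean network $N:\{0,1\}^n\to\{0,1\}$ outputs $1$ exactly on satisfying assignments. I would then take $\varphi$ to be the constantly-true condition on $\{0,1\}^n$ and $\psi(N(x))\Leftrightarrow N(x)=0$; the rule $\varphi\Rightarrow\psi$ holds iff $N$ never outputs $1$, i.e.\ iff $S$ is unsatisfiable. This gives a polynomial-time reduction from the co-NP-complete problem UNSAT, establishing hardness for Boolean propositional rules. For monotonicity rules I would argue, exactly as in the ReLU case, that one can instead set up the construction so that unsatisfiability of $S$ forces $N$ to be constantly zero, making the monotonicity rule hold vacuously, while a satisfying assignment breaks it; alternatively I would simply cite the Boolean version of Theorem~\ref{mainall}(b) to transport hardness from oblique rules.

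For membership in co-NP, the certificate for a \emph{violating} input is far simpler than in the real case because the domain is finite. A violating input is just a point $x\in\{0,1\}^n$ together with (for MofN rules) the bit-vector $a\in\{0,1\}^t$ recording which $\varphi_i$ hold at $x$. Given $x$, one evaluates the Boolean network $N$ in polynomial time gate by gate, checks that the counting condition $|\{i:\varphi_i(x)\}|\,\square\,r$ is met, and checks that $\psi(N(x))$ fails; each $\varphi_i$ and $\psi$, being Boolean combinations of half-spaces over the discrete cube, is evaluable in polynomial time. Thus the nondeterministic machine guesses $x$ and verifies deterministically that the rule is violated, placing the complement in NP and hence rule-verification in co-NP; the three cases $\square\in\{\leq,=,\geq\}$ are handled identically.

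The main obstacle I anticipate is the encoding constraint of Boolean networks: weights and biases are restricted to $\{-1,0,1\}$ and the only activation is Heaviside, so one cannot freely choose thresholds as in the ReLU construction. The delicate point is arranging each gadget — especially the $q$-ary clause conjunction and the summing OR-gate — to respect these restrictions while still firing with exactly the intended threshold. I would resolve this by either appealing to the folklore circuit-to-Boolean-network translation as a black box, or by explicitly layering the network so that each node sums only a bounded number of inputs with $\pm1$ weights and a constant bias, duplicating signals across layers where a large fan-in threshold is needed so that every individual neuron's bias stays in $\{-1,0,1\}$. Confirming that this layering faithfully computes $S$ under the Heaviside semantics, and that the resulting network has polynomial size, is the one place requiring genuine care.
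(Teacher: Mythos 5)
Your membership argument and your treatment of propositional and MofN rules are sound. For propositional rules you take a different (but valid) route from the paper: you encode the 3SAT instance into the \emph{network} via the folklore circuit-to-network translation and use a trivial conditional part, whereas the paper encodes the SAT formula into the \emph{conditional part} $\varphi$, pairs it with a trivially false conclusion, and needs no network construction at all. Both reductions are correct; the paper's is simply lighter, while yours reuses machinery you need anyway. The MofN hardness via $t=r=1$ matches the paper.

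The genuine gap is in part 2, the monotonicity rules. Your primary plan --- ``unsatisfiability of $S$ forces $N$ to be constantly zero, while a satisfying assignment breaks it'' --- does not work as stated over the Boolean cube. Take $F=x_1$ and $A$ the identity: $F$ is satisfiable, $N_F$ is not constant, yet $N_F$ is \emph{monotone}, so the rule $(N_F,I,1)$ holds and the reduction gives the wrong answer. In the ReLU case this problem is hidden because one can exploit fractional inputs (e.g.\ $N$ vanishes at the center $(\frac12,\dots,\frac12)$ of the cube even when $S$ is satisfiable, so a satisfiable $S$ always yields a non-constant, hence rule-violating, $N$); no such interior points exist in $\{0,1\}^n$, so ``exactly as in the ReLU case'' does not go through. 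The paper repairs this with a preprocessing step you are missing: first evaluate $F(1,\dots,1)$; if it holds, $F$ is satisfiable and one outputs a fixed yes-instance of the complement; otherwise $N_F(1,\dots,1)=0$, and then any satisfying assignment $x^*\leq(1,\dots,1)$ witnesses $N_F(x^*)=1>0=N_F(1,\dots,1)$, violating the rule, while unsatisfiability makes $N_F\equiv 0$ and the rule holds. Your fallback is also flawed: Theorem~\ref{mainall}(b) reduces monotonicity rules \emph{to} oblique rules, so it transports hardness \emph{from} monotonicity \emph{to} oblique (and membership in the other direction); citing it to ``transport hardness from oblique rules'' to monotonicity rules uses the reduction backwards.
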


\begin{proof}
Membership in co-NP is trivial for all problems, a suitable certificate is always an input vector in $\{0,1\}^n$ that leads to failure. It remains to show completeness:

1.) To reduce an instance $\varphi$ of SAT on the complement of the Boolean verification of propositional rules, set $\varphi$ as the conditional part for any network $N$ and a formula that is trivially wrong as output class.

2.) For completeness, we reduce a SAT-formula $F(x_1,...,x_n)$ on $n$ variables to the complement of deciding monotonicity rules. First, check whether $F(1,...,1)$, in that case return true, otherwise return the instance $(N_F,A,1)$, where $N_F:\{0,1\}^n\rightarrow \{0,1\}$ evaluates an input $(x_1,...,x_n)$ to 1 if and only if $F(x_1,...,x_n)$ holds, and $A\in\{0,1\}^{n\times n}$ is the identity matrix.

3.) As for real-valued networks, Boolean propositional rules are special cases of MofN rules with $t=r=1$.
\end{proof}

\section{Consistency and Exhaustiveness}\label{Section:consistency}
One property that we want a set of rules to have is that it should holistically depict the network's behaviour. In this section, we do not examine any more whether a single rule is correct or not, but start focusing on larger sets of rules. Here, two questions are in focus: Is a set of rules actually obeyable, meaning is there a network that obeys all of them at the same time? And are these rules so restrictive that one can make choices when designing a network according to them, or is there just one unique function that obeys all the rules? This is referred to as "fidelity" or "completeness" in literature, see for example \cite{fidel1}.  We renamed those properties, because in the literature they often concern the accordance with training data rather than with the network.
In this framework, we do not work with a given network any more, but instead start with a set of rules and search for possible networks obeying them, or we are given a set of rules and ask ourselves whether there is more than one network obeying all of them.
\begin{definition} 
A set of rules is called \emph{consistent}, if there exists a network that obeys all of these rules, otherwise it is called contradictory. 

A set of rules is called \emph{exhaustive}, if for every network-input $x\in\R^n$, there exists at most one output $y\in\R^m$ so that $N(x)=y$ is in accordance with all rules, otherwise it is called defective.

The obvious decision problems consistency and exhaustiveness are defined for all of the rule types given in Section \ref{Section:preliminaries}.
\end{definition}

An extracting algorithm obviously made a mistake if the extracted rules are not consistent, consistency is therefore a desirable property. The same holds for exhaustiveness, because we want the set of rules to provide as much knowledge on the function as possible and to be as close as possible to "replacing" the network in the sense that it already contains all information. 
\begin{example}
Let $S=\{s_1,s_2\}$ be a set of two rules.

The optimal constellation is that $S$ is consistent as well as exhaustive. This is, for example, the case for the propositional rules $s_1=(\varphi_1\Rightarrow\psi_1)$ with $\varphi_1(x)=x\leq0, \psi_1(N(x))=(N(x)=0)$ and $s_2=(\varphi_2\Rightarrow\psi_2)$ with $\varphi_2(x)=x\geq0, \psi_2(N(x))=(N(x)=0)$, which allow precisely the constant zero function. Had we dropped $s_2$, the rule $s_1$ would allow both the constant zero function and the ReLU-function, it is therefore consistent but not exhaustive.
Note that a set of rules can be both defective and contradictory, an example is $s_1=(\varphi_1\Rightarrow\psi_1)$ with $\varphi_1(x)=x\leq0, \psi_1(N(x))=(N(x)=0)$ together with $s_2=(\varphi_2\Rightarrow\psi_2)$ with $\varphi_2(x)=x\in[-1,1], \psi_2(N(x))=(N(x)=1)$ because $N(0)$ cannot be 0 and 1 at the same time, but also $N(2)$ is not restricted at all. Also a set of exhaustive rules is not necessarily consistent, this is for example the case had we replaced $\varphi_2(x)=x\geq-1, \psi_2(N(x))=(N(x)=1)$ in the previous constellation.
\end{example}

\begin{lemma}[\hspace{1sp}\cite{arora}]\label{universal}
Every continuous piecewise-linear function $f$ with rational coefficients can be realized by a ReLU-network $N$, i.e., $f(x)=N(x)$ for all inputs $x$. 

\end{lemma}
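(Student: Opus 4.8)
The plan is to prove the single nontrivial direction: that any continuous piecewise-linear (CPWL) function can be \emph{expressed} by a ReLU-net (the converse, that every ReLU-net computes a CPWL function, follows by induction over the layers and is not needed here). First I would reduce to the scalar case: a function $f:\R^n\to\R^m$ is CPWL if and only if each coordinate $f_k:\R^n\to\R$ is, and $m$ scalar subnetworks computing the $f_k$ in parallel assemble into a single net for $f$ by placing them side by side and collecting their outputs into one output layer. So it suffices to realize an arbitrary scalar CPWL function $f:\R^n\to\R$ with rational data.

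The key structural input, and the step I expect to be the main obstacle, is a \emph{max-min (lattice) representation} of CPWL functions: every scalar CPWL $f$ can be written as
\[ f(x)=\max_{1\le i\le p}\ \min_{j\in I_i}\ \ell_{ij}(x), \]
a finite max of finite mins of affine functions $\ell_{ij}(x)=w_{ij}^\top x+c_{ij}$. The affine pieces of $f$ are the finitely many distinct affine functions agreeing with $f$ on the top-dimensional cells of its polyhedral subdivision; one groups, for each cell, the affine pieces that dominate $f$ on that cell, takes their pointwise minimum, and then the maximum over cells. If $f$ has rational coefficients, then so does each $\ell_{ij}$, since they are among the affine pieces of $f$. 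Establishing this representation rigorously — in particular verifying that the max-min combination agrees with $f$ everywhere, and not merely on the interiors of the cells — is the technical heart of the argument.

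Once the representation is in hand, the remaining steps are routine gadget constructions. An affine function is realized by a single linear layer. Binary maximum is ReLU-computable via
\[ \max(a,b)=ReLU(a-b)+ReLU(b)-ReLU(-b), \]
and binary minimum via $\min(a,b)=-\max(-a,-b)$; both use only rational weights and biases. Iterating these over a balanced binary tree realizes each inner block $\min_{j\in I_i}$ and then the outer $\max_i$, at the cost of depth logarithmic in the number of arguments. To feed an affine value computed early into a later layer one uses the identity $t=ReLU(t)-ReLU(-t)$ to propagate it unchanged through the intervening ReLU layers. Composing the affine layers with the max/min subnetworks yields a finite ReLU-net with all weights and biases rational that computes $f$ exactly, which completes the construction.
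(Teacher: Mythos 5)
Your proposal is correct in outline, but note that the paper itself does not prove this lemma at all: it is imported from the cited reference \cite{arora}, so there is no internal proof to compare against. What you propose is essentially the standard argument underlying that citation. Arora et al. go through the Wang--Sun representation (every scalar CPWL function is a signed sum $\sum_j s_j \max_{i\in S_j}\ell_i$ of maxima of at most $n+1$ of its affine pieces), whereas you use the closely related max--min lattice representation (Ovchinnikov, Tarela--Martinez); both routes reduce the problem to ReLU-implementing maxima and minima of affine functions, and both preserve rationality for the same reason you give, namely that the affine functions involved can be chosen among the affine pieces of $f$. Your gadget layer is correct throughout: $\max(a,b)=ReLU(a-b)+ReLU(b)-ReLU(-b)$, $\min(a,b)=-\max(-a,-b)$, the identity $t=ReLU(t)-ReLU(-t)$ for propagating values through intervening layers, and the parallel assembly of the $m$ coordinate networks.

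The one debt in your write-up is the representation theorem itself. You state the correct construction (for each cell, take the minimum of the pieces dominating $f$ on that cell, then the maximum over cells) and honestly flag that verifying that each inner minimum $\min_{j\in I_i}\ell_{ij}$ lies below $f$ \emph{globally}, and not just on the cell from which it was built, is the technical heart; but you do not supply that verification, and it is genuinely nontrivial (the standard proof walks a line segment from the defining cell to an arbitrary point and tracks which affine pieces dominate along the way). If you invoke the lattice representation as a known theorem, your proof is complete and on par with the paper's own treatment, which invokes the entire lemma as known; if you intend the argument to be self-contained, this is the step that still has to be filled in.
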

It is an easy consequence, that for any finite set of data points $(x_i,y_i)\in\Q^{n+m}$ with pairwise different inputs $i\neq j\Rightarrow x_i\neq x_j$, there exists a ReLU-network $N$ with $N(x_i)=y_i$ for all $i$.

\begin{theorem}
Let $S=\{s_1,...,s_n\}$ be a set of propositional, MofN and oblique rules so that the sets defined by their conditional parts are separated by closed neighbourhoods and no conclusion part is the constant zero function. Then there exists a ReLU network obeying all the rules.
\end{theorem}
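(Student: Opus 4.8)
The plan is to make the network constant on each conditional region, equal to a fixed output witnessing the corresponding conclusion, and to interpolate piecewise-linearly in between; Lemma \ref{universal} then converts this function into a ReLU network. Write $A_i := S_{\varphi_i}\subseteq\R^n$ for the region cut out by the conditional part of $s_i$ and $S_{\psi_i}\subseteq\R^m$ for the set its conclusion allows. Every rule type here (propositional, oblique, and also MofN, whose condition $|\{i:\varphi_i(x)\}|\,\square\,r$ is itself a Boolean combination of half-spaces) makes $A_i$ and $S_{\psi_i}$ \emph{semilinear}, i.e.\ finite unions of rational polyhedra. Since no conclusion is the constant zero function, each $S_{\psi_i}$ is nonempty and hence contains a rational point $y_i\in\Q^m$. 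It then suffices to construct one continuous piecewise-linear $f:\R^n\to\R^m$ with rational coefficients such that $f\equiv y_i$ on $A_i$ for every $i$: the realizing network $N$ from Lemma \ref{universal} satisfies $N(A_i)=\{y_i\}\subseteq S_{\psi_i}$, so it obeys each rule $s_i$.

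To build $f$ I would use the separation hypothesis through piecewise-linear bump functions. The closures $\overline{A_i}$ are again finite unions of closed rational polyhedra, and the hypothesis forces them to be pairwise disjoint; two disjoint closed semilinear sets have positive distance (the difference $\overline{A_i}-\overline{A_j}$ is, by Fourier--Motzkin, again a closed polyhedral set avoiding the origin, and a closed set missing $0$ stays bounded away from it), so some rational $\delta>0$ lies below all of these distances. The sup-norm distance $d_i(x):=\min_{a\in\overline{A_i}}\|x-a\|_\infty$ is the optimal value of a parametric linear program and hence a continuous piecewise-linear function of $x$ with rational data. Put $\lambda_i:=\max(0,\,1-\tfrac{2}{\delta}\,d_i)$; this is piecewise-linear, satisfies $\lambda_i\equiv1$ on $\overline{A_i}$, and $\lambda_i\equiv0$ on every $\overline{A_j}$ with $j\neq i$, since there $d_i\geq\delta$. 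Consequently $f:=\sum_i\lambda_i\,y_i$ is continuous, piecewise-linear, rational, and on $A_i$ equals $1\cdot y_i+\sum_{j\neq i}0\cdot y_j=y_i$; its behaviour off $\bigcup_i A_i$ is irrelevant.

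I expect the main obstacle to be piecewise-linearity rather than mere continuity: a continuous function taking value $y_i$ on each separated $A_i$ exists by elementary extension arguments, but Lemma \ref{universal} demands a piecewise-linear one with rational coefficients. This is exactly where the separation and the choice of the $\ell_\infty$ (rather than the Euclidean) distance pay off: the sup-norm distance to a rational polyhedron is genuinely piecewise-linear, whereas the Euclidean one is not, and the positive gap $\delta$ is what lets each bump be switched off on the competing regions while everything stays piecewise-linear. The remaining points are then routine and I would only check them briefly: that $d_i$ is piecewise-linear with rational breakpoints (parametric LP), that a nonempty rational semilinear set contains a rational point, and that finite sums and constant scalings of piecewise-linear functions remain piecewise-linear.
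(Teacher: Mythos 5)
Your proof is correct and follows essentially the same route as the paper: reduce to Lemma \ref{universal}, fix one admissible (rational) output value per rule, make the function constant on each conditional region, and use the positive separation distance between the closures of the semilinear conditional sets to extend to a continuous piecewise-linear function on $\R^n$. The only difference is level of detail: the paper simply asserts that such an extension exists, whereas you construct it explicitly via $\ell_\infty$-distance bump functions $\lambda_i=\max\bigl(0,\,1-\tfrac{2}{\delta}d_i\bigr)$, which fills in the step the paper leaves implicit.
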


\begin{proof}
By Lemma \ref{universal}, it suffices to find a continuous semi-linear function obeying the rules. For each rule, choose a value that the conclusion part allows and tie that the function has that precise value all over the corresponding conditional part. Separability of a finite number of semi-linear sets leads to the existence of a constant $c>0$ so that $\varphi_i(x)\land \varphi_j(y)$ with $i\neq j$ enforces $d(x,y)\geq c$. The function can therefore be extended to a semilinear and continuous function on $\R^n$.
\end{proof}

\begin{theorem}\label{conex}
Let $R$ be a set of rules. Then:

1.) Deciding consistency and exhaustiveness of monotonicity rules and total monotonicity rules is in P if we only allow networks to work with ReLU-activation.

2.) Deciding consistency of propositional rules is co-NP-hard if we only allow networks to work with ReLU-activation. This also holds for obliques and MofN rules.

3.) Deciding exhaustiveness of propositional rules is co-NP-hard. This also holds for obliques and MofN rules.
\end{theorem}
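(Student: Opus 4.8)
The plan is to treat the three parts separately: the monotonicity cases by a triviality argument, and the propositional/oblique/MofN cases by direct reductions from 3SAT.

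For part 1 I would observe that both questions collapse to constant answers. Every constant function is ReLU-realizable and obeys every monotonicity and every total monotonicity rule: a constant total-monotone function has $N(x)_i = N(y)_i$, so $N(x)_i \le N(y)_i$ holds trivially, and a constant classifier has a fixed argmax, so each rule $(A,i)$ is satisfied either because its premise always holds together with its conclusion, or because the premise never holds. Hence consistency is always "yes". For exhaustiveness I would exhibit two obeying networks that differ at every input: the constants $N\equiv 0$ and $N\equiv 1$ in the total-monotonicity case, and (for $m\ge 2$) the two constant classifiers with argmax $1$ and argmax $2$ in the monotonicity case. Since these disagree everywhere while obeying all rules, the set is never exhaustive. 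Both answers are thus computable in constant time, placing all four problems in P.

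For part 2 I would reduce 3SAT to the complement of consistency. Given $S=C_1\wedge\cdots\wedge C_q$ over $x_1,\dots,x_n$, I would build a propositional condition $\varphi_S$ on $\R^n$ by replacing, inside the CNF, each positive literal $x_i$ with the half-space $x_i\ge 1/2$ and each negated literal $\bar x_i$ with the complementary half-space $x_i< 1/2$; then $\varphi_S$ is a Boolean combination of paraxial inequalities, and rounding via $a_i := [x_i\ge 1/2]$ gives an exact correspondence between points of $\varphi_S$ and satisfying assignments, so $\varphi_S$ is nonempty iff $S$ is satisfiable. The produced instance is the pair of rules $\varphi_S\Rightarrow (N(x)=0)$ and $\varphi_S\Rightarrow (N(x)=1)$. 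If $S$ is satisfiable, any point of $\varphi_S$ must output both $0$ and $1$, so no network obeys the set and it is inconsistent; if $S$ is unsatisfiable, $\varphi_S=\emptyset$ imposes nothing and the constant zero network obeys the set. Hence consistency is co-NP-hard, and since propositional rules are special cases of oblique and of MofN rules by Theorem \ref{mainall}a, the same instances witness co-NP-hardness for those types.

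For part 3 I would reduce 3SAT to defectiveness using the single rule $(\neg\varphi_S)\Rightarrow (N(x)=0)$ with $\varphi_S$ as above. If $S$ is unsatisfiable then $\varphi_S=\emptyset$, so $\neg\varphi_S$ holds on all of $\R^n$, the rule pins $N\equiv 0$, and every obeying network agrees everywhere, so the set is exhaustive. If $S$ is satisfiable, I would fix a satisfying assignment and note that $\varphi_S$ contains the full-dimensional open box $B$ given by $x_i\in(1/2,1]$ for its true variables and $x_i\in[0,1/2)$ for its false ones, on which the rule imposes nothing. I would then exhibit two obeying networks differing on $B$: the constant zero function, and a continuous piecewise-linear bump that is $0$ outside $B$ (hence $0$ on $\neg\varphi_S\subseteq \R^n\setminus B$) and strictly positive at the center of $B$, which is ReLU-realizable by Lemma \ref{universal}. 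These disagree inside $B$, so the set is defective. Thus exhaustiveness is co-NP-hard, and specialization again transfers the result to oblique and MofN rules. The step I expect to be most delicate is guaranteeing full-dimensionality of the free region in part 3: if $\varphi_S$ were merely lower-dimensional (say, if literals pinned coordinates to exact Boolean values), continuity of ReLU networks would force the value on $\varphi_S$ from the surrounding pinned region and the set would spuriously stay exhaustive. Choosing $1/2$-thresholds so that satisfiability yields an open box, while keeping one strict inequality so the correspondence with Boolean assignments stays exact, is precisely what makes the bump construction and hence the whole reduction go through.
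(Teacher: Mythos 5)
Your proposal is correct, and it follows the paper's overall plan --- part 1 by the triviality of constant functions, parts 2 and 3 by reductions from 3SAT using a real relaxation of the formula --- but your concrete gadgets differ from the paper's and are worth comparing. For consistency, the paper pairs a rule forcing $N\equiv 0$ with rules pinning $N(a)=S(a)$ at every Boolean point $a\in\{0,1\}^n$, whereas you place two directly contradictory conclusions ($N(x)=0$ and $N(x)=1$) on the relaxed region $\varphi_S$; both are valid, and yours needs only two rules. For exhaustiveness, the paper's rule forces $N$ to vanish on boxes $\bigtimes_i[x_i,x_i+1]$ around falsifying Boolean points, which constrains only a bounded region --- taken literally over all of $\R^n$, inputs outside that region remain unconstrained, so the paper's set would never be exhaustive; the construction implicitly relies on the rules being understood ``on $[0,1]^n$''. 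Your gadget $(\neg\varphi_S)\Rightarrow(N(x)=0)$ avoids this entirely: when $S$ is unsatisfiable the premise covers all of $\R^n$ and pins $N\equiv 0$ globally, and when $S$ is satisfiable the full-dimensional free box plus the ReLU-realizable bump (via Lemma \ref{universal}) witnesses defectiveness under either reading of ``in accordance with all rules'' (pointwise or via obeying networks). So your version is not just a restatement: on part 3 it is tighter than the paper's own sketch, and your closing remark about why the free region must be full-dimensional identifies exactly the continuity obstruction that makes thinner encodings fail.
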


\begin{proof}
1.) Every constant function obeys every monotonicity rule, so consistency is always and exhaustiveness never the case.

2.) To reduce an instance $S(x_1,...,x_n)$ of 3SAT to consistency of propositional rules, it suffices to construct a set $R$ of rules on $[0,1]^n$ that are consistent if and only if $S$ is not satisfiable. The set $R$ contains a rule stating that the network $N$ should compute the constant zero function along with rules that demand that $N(x_1,...,x_n)=S(x_1,...,x_n)$ for all $x_1,...,x_n\in\{0,1\}$. 

Such rules exist because the set $S$ interpreted as subsets of $\{0,1\}^n$ can be described by axial-parallel conditions in polynomial size in the representation of $S$. If $S$ is not satisfiable, then the network computing $N\equiv 0$ obeys all the rules, if not then no such network exists and the rules are not consistent.

This ongoing, as in 3.), obviously works the same for Boolean networks. By the inheritances shown in Theorem \ref{mainall}, this also holds for obliques and MofN rules, as in 3.).

3.) Again we reduce an instance $S(x_1,...,x_n)$ of 3SAT, this time $R$ has to be non-exhaustive if and only if $S$ is satisfiable. The set $R$ contains a rule stating that whenever $S(x_1,...,x_n)=0$, the network $N$ should compute zero whenever all over $\bigtimes_{i=1}^n[x_i,x_i+1]$. This means that the constant zero function is always possible, and as soon as $S(x_1,...,x_n)=1$ at some point, the function has the choice either to be 0 or 1.

Such rules exist for the same reason as in 2.)
\end{proof}

The following result demonstrates the limits of the framework of extracted rules:
\begin{theorem}\label{unique}
Let $R$ be a finite consistent and exhaustive set of rules for a ReLU network assume $N$ obeys all rules in $R$.

1.) If $R$ contains only propositional, MofN and oblique rules, then $N$ can only compute a constant function.  

2.) $R$ cannot solely consist of monotonicity and total monotonicity rules. 
\end{theorem}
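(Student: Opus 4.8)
The plan is to use that a consistent and exhaustive rule set leaves exactly one admissible output at each input, combined with the continuity of ReLU networks. For Part 1, fix an input $x$ and note that whether a value $y=N(x)$ accords with all rules depends only on the firing set $F(x)=\{i:\varphi_i(x)=1\}$: the non-firing rules are vacuously satisfied, while the firing ones force $y\in\bigcap_{i\in F(x)}S_{\psi_i}=:V(x)$. Since $N$ obeys $R$ we have $N(x)\in V(x)$, so $V(x)$ is nonempty, and exhaustiveness gives $|V(x)|\leq 1$; hence $V(x)=\{N(x)\}$. I would then collect every hyperplane occurring in the half-spaces describing the $\varphi_i$ into a single finite arrangement (for an MofN rule the count $|\{i:\varphi_i(x)\}|\,\square\, r$ is itself constant once each inner $\varphi_i$ is). On the open interior of each full-dimensional cell of this arrangement the truth value of every $\varphi_i$ is constant — open versus closed half-spaces are irrelevant, as we are strictly on one side of each hyperplane — so $F(x)$, and therefore $V(x)$, is constant there. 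Consequently $N$ is constant on the interior of each cell.

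To globalize this, I would use continuity of $N$ together with connectivity of the cell-adjacency graph of a hyperplane arrangement: any two full-dimensional cells are joined by a chain in which consecutive cells share a facet, because a generic segment between interior points crosses the hyperplanes transversally, one facet at a time. If cells $C,C'$ share a facet and $p$ lies in its relative interior, then $p$ is a limit of interior points of both $C$ and $C'$, so by continuity the two cell-constants must agree at $p$ and hence are equal. Propagating along the adjacency graph forces a single constant value on all cell interiors, and since these interiors are dense in $\R^n$, continuity makes $N$ globally constant.

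For Part 2, I would invoke the observation already used for Theorem \ref{conex}.1: every constant function obeys every monotonicity rule and every total monotonicity rule, since a constant map trivially preserves both the argmax and the coordinatewise order. As the output dimension is at least one, there are infinitely many pairwise distinct constant functions, each ReLU-realizable and each obeying any set consisting purely of monotonicity and total monotonicity rules. Such a set therefore admits more than one obeying network and can never be exhaustive; hence a consistent \emph{and} exhaustive $R$ cannot solely consist of rules of these two types.

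The main obstacle is the globalization step of Part 1: passing from ``constant on every cell interior'' to ``globally constant'' requires the connectivity of the cell-adjacency graph and the transversality of a generic crossing segment, together with the verification that relative-interior facet points are genuine two-sided limits of the adjacent cell interiors. The per-cell determination of the output and the entirety of Part 2 are, by comparison, routine.
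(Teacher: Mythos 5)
Your proposal is correct, but Part 1 takes a genuinely different route from the paper's. The paper argues by contradiction: it locates a full-dimensional box on which the (assumed non-constant, piecewise-linear) $N$ has non-zero slope and on which every conditional part has constant truth value, and then constructs an explicit ReLU-computable reparametrization $f$ of that box (squeezing points toward a shifted center) so that $N'=N\circ f$ is a \emph{second} network obeying all rules, contradicting exhaustiveness. You instead argue directly: exhaustiveness plus obedience pins down $N(x)$ as the unique element of $V(x)=\bigcap_{i\in F(x)}S_{\psi_i}$, which depends only on the firing pattern $F(x)$; that pattern is constant on the open cells of the finite hyperplane arrangement extracted from the conditional parts (including the inner formulas of MofN rules), so $N$ is constant on cell interiors, and continuity together with facet-adjacency connectivity globalizes the constant. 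Your argument is shorter, needs no auxiliary network construction and no measure-theoretic reasoning about sign patterns, and in fact never uses piecewise linearity of $N$ at all: it shows that \emph{any} continuous function obeying such an exhaustive rule set is constant. (Your globalization step could even be compressed: $N$ maps the dense union of cell interiors into a finite set, and a continuous map on the connected space $\R^n$ whose image lies in the closure of a finite set is constant, so the adjacency-graph chain argument is avoidable.) One caveat: your key step $\vert V(x)\vert\le 1$ takes ``$N(x)=y$ is in accordance with all rules'' in the pointwise sense, i.e.\ no fired rule is violated by the value $y$, whereas the paper's perturbation proof also works under the stricter function-level reading in which accordance means attainability by some rule-obeying network. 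The pointwise reading is the one the paper intends --- its example of a rule set that is simultaneously contradictory and defective only makes sense pointwise, since under the function-level reading a contradictory set would be vacuously exhaustive --- so this is a matter of robustness, not a gap. Your Part 2 coincides with the paper's argument, which invokes Theorem \ref{conex}, part 1: constant functions obey every monotonicity and total monotonicity rule, so such a set admits many obeying networks and can never be exhaustive.
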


\begin{proof}
1.) We prove the statement by contradiction, so assume there were a set of rules $R=\{(\varphi_1\Rightarrow\psi_1),(\varphi_2\Rightarrow\psi_2),...,(\varphi_k\Rightarrow\psi_k)\}$ and a ReLU-network $N$ so that $N$ does not compute a constant function, $N$ obeys all the rules in $R$ and no function other than $N$ that can be computed by a ReLU-network obeys all the rules in $R$. Note that in the above, by abuse of notation $\varphi_i$ is always the rules conditional part concerning the input. In the case of MofN-rules, for example, it does not coincide with one of the $\varphi_1,...,\varphi_t$ from Definition \ref{rules}. We now want to alter $N$ in such a way that the function computed by the new network $N'$ differs, but only at an extend so small that all rules in $R$ are still obeyed, this would obviously contradict exhaustiveness.

First, note that for a non-constant function $N$ there must necessarily exist some polytope $P\subseteq \R^n$ in the input space containing a full-dimensional box on which $N$ has a constant non-zero slope. The reason is the following: Every sign-pattern $a\in\{0,1\}^\ell$ determining which nodes are active and which ones are inactive, where $\ell$ is the overall amount of nodes, defines a subset of inputs $X_a=\{x\in\R^n \mid \text{pattern of $x$ is } a\}$, restricted on which $N$ is linear. By continuity, the union $\bigcup\limits_{y\in A}\{x\in\R^n \mid N(x)=y\}$ for a finite set $A$ cannot have full measure, so there must exist a set $X_a$ with non-zero measure on which $N$ has non-zero slope. 

Next, we argue that $P$ can be chosen in a way so that for each rule $(\varphi_i\Rightarrow\psi_i)\in R$, the premise $\varphi_i$ either holds everywhere or nowhere in $P$. To see this, define for each $\alpha\in\{0,1\}^k$ the set $P_\alpha=\{x\in P \mid \bigwedge\limits_{\alpha_i=1} \varphi_{i}(x)\}$. Note that $P$ is the finite union of these semi-linear sets $P_\alpha$, so at least one of them must have non-zero measure, take it as the new $P$. Without loss of generality, $P$ is the unit cube $[0,1]^n$. This can be assumed because every semi-linear set with non-zero measue contains a hypercube with equal side lenghths, that can be mapped to the unit cube by a linear function, which can obviously be performed in a ReLU-network. Further, assume that the partial derivative in direction $x_1$ is not zero, otherwise permutate the variables. All conclusions $\psi$ are local, meaning they only ever consider an output $N(x)$ and no slope or comparison with another point $N(y)$. The idea is therefore that it suffices if the new function/network $N'$ has the following properties:
\begin{description}
\item[1.)] For all $x\notin[0,1]^n$ we have $N'(x)=N(x)$.
\item[2.)] For all $x\in[0,1]^n$ we have $N'(x)=N(y)$ for some $y\in[0,1]^n$.
\end{description}

We claim that these rules are obeyed, if the function on $x\in[0,1]^n$ is computed as follows:

Find the point $a(x)$ on the boundary of $[0,1]^n$ and the parameter $\lambda(x)\in[0,1]$ so that $x=\lambda(x) a(x) + (1-\lambda(x))([\frac12]^{n})$ is a convex combination of $a(x)$ and the center of the cube $[\frac12]^{n}$, where $[x]^{n}:=(x,x,...,x)\in\mathbb R^n$ denotes the vector consisting only of entries $x$. 
We then choose $N'(x)=N(\lambda(x) a(x) + (1-\lambda(x))(\frac34,[\frac12]^{n-1}))$ the function value of $N$ at the same convex combination but where the center of the cube is moved in direction $x_1$ by $\frac14$. Note that this function $N'$ does not coincide with $N$, because $N'([\frac12]^{n})=N(\frac34,[\frac12]^{n-1})\neq N([\frac12]^{n})$ for the slope in direction $x_1$ was assumed to be non-zero. It remains to show that the input manipulation described by the convex combinations can be computed in a ReLU-network.

To see this, observe that the input manipulation is equivalent to moving a point in direction $x_1$ by half of its minimal distance to any boundary, including the boundary faces $x_1=0$ and $x_1=1$. These distances inside the unit cube are given by $ReLU(x_i)$ for the boundary $x_i=0$ and  $ReLU(1-x_i)$ for the boundary $x_i=1$, note that they are zero on the outside of the cube of the respective boundary. A minimum can always be computed by a ReLU-network via 
\[min(a,b)=\frac{a+b}{2}-ReLU(\frac{a-b}{2})-ReLU(\frac{b-a}{2})\]
The desired input manipulation \[f(x)=\begin{cases}
x &x\notin[0,1]^n\\
x+min\{\vert x_i\vert,\vert x_i-1\vert \mid i\in\{1,...,n\}\}&else
\end{cases}\]
can therefore be computed and $N'=N\circ f$ is different from $N$ but still obeys all the rules.

2.) Follows by Theorem \ref{conex}, 1.). 
\end{proof}

\section{Conclusion and Further Questions}
We classified several verification tasks for extracted rules and proposed a framework in which they can be assessed. Our overall insight is, that these verification tasks are in co-NP, many of them complete for co-NP, as long as the describing data of the rules and the net is semi-linear, including that the activation function used in the net should be ReLU. Some remaining open questions are

1.) Given a consistent set of MofN rules, can the size of the smallest witnessing network be bounded in the size/amount of rules? Is it polynomial? Does this imply that verifying consistency of MofN rules is in $\Sigma_2^P$? If so, is it complete for $\Sigma_2^P$?

2.) Is it possible to efficiently transform linear conditions into axial-parallel ones? If yes, can this be done by a (semi-)linear transformation that can be performed by additional network layers? Under what conditions would this imply that oblique rules can be reduced to propositional rules?

3.) Does Theorem \ref{unique}, 1.) also hold if additionally monotonicity and total monotonicity rules are allowed? Or can a function be found that is uniquely determined by a set of such rules?

\textbf{Acknowledgment}: I want to thank Klaus Meer for helpful discussions.

\bibliography{literatur}

\bibliographystyle{plain}

\end{document}